\documentclass[12pt]{article}
\usepackage[margin=1in]{geometry}
\usepackage{setspace}
\setstretch{1.3} 
\usepackage{microtype}
\usepackage{graphicx}

\usepackage{subcaption}
\usepackage{booktabs} 
\usepackage{makecell}
\usepackage{amsmath}
\usepackage{cprotect}
\usepackage{authblk}
\usepackage{fancyvrb}
\usepackage{indentfirst}
\usepackage{multirow}
\usepackage{enumitem}
\usepackage{xcolor}

\newcommand{\mc}[1]{\mathcal{#1}}

\usepackage{amsfonts}
\newcommand{\EnbPI}{\Verb|EnbPI|}
\newcommand{\ERAPS}{\Verb|ERAPS|}
\newcommand{\SRAPS}{\Verb|SRAPS|}
\newcommand{\SAPS}{\Verb|SAPS|}
\usepackage{array}
\newcolumntype{C}[1]{>{\centering\let\newline\\\arraybackslash\hspace{0pt}}m{#1}}
\newcommand{\kreg}{k_{reg}}
\usepackage{amsthm}

\newcommand{\R}{\mathbb{R}}
\newcommand{\PP}{\mathbb{P}}
\newcommand{\A}{\mathcal{A}}
\usepackage{stylesheet}
\usepackage{authblk}
\usepackage{hyperref}

\title{Conformal prediction set for time-series\thanks{Strongly accepted by the Workshop on Distribution-Free Uncertainty Quantification at International Conference on Machine Learning (ICML) 2022.} }

\author{Chen Xu, Yao Xie}

\begin{document}

\maketitle

\begin{abstract}
   When building either prediction intervals for regression (with real-valued response) or prediction sets for classification (with categorical responses), uncertainty quantification is essential to studying complex machine learning methods. In this paper, we develop Ensemble Regularized Adaptive Prediction Set (ERAPS) to construct prediction sets for time-series (with categorical responses), based on the prior work of \citep{EnbPI}. In particular, we allow unknown dependencies to exist within features and responses that arrive in sequence. Method-wise, ERAPS is a distribution-free and ensemble-based framework that is applicable for arbitrary classifiers. Theoretically, we bound the coverage gap without assuming data exchangeability and show asymptotic set convergence. Empirically, we demonstrate valid marginal and conditional coverage by ERAPS, which also tends to yield smaller prediction sets than competing methods.
\end{abstract}

\section{Introduction }
Despite the tremendous success of complex machine learning models such as neural networks for classification\citep{Krizhevsky2012ImageNetCW,Xie2017AggregatedRT,Qi2017PointNetDL}, how to construct prediction sets for uncertainty quantification is essential for the safe and confident deployment of these methods in practice. For example, in high-stake applications such as medical diagnosis \citep{Milletari2016VNetFC,Erickson2017MachineLF}, a single prediction of disease type can be insufficient; patients and doctors often want to be informed of several most likely possibilities for better treatment planning. This problem also arises in multiple hypothesis testing \citep{juditsky2020statistical} or the construction of least-ambiguous sets \citep{sadinle2019least}.
Nowadays, conformal prediction \citep{conformaltutorial, Zeni2020ConformalPA} has been one of the most popular uncertainty quantification frameworks for deep learning models. It is particularly appealing due to its distribution-free assumption of the data-generating processes and the applicability to arbitrary prediction models. In general, conformal prediction methods assign non-conformity scores to possible outcomes, some of which are included in the prediction set/interval depending on the value of the scores. Many methods follow this logic and have yielded promising results in fields of computer vision \citep{Candes_classification,MJ_classification}, drug discovery \citep{Eklund2013TheAO, Bosc2019LargeSC}, anomaly detection \citep{Smith2014AnomalyDO,Xu2021ConformalAD} etc.

However, the conformal prediction framework crucially depends on the assumption of data exchangeability, which can be a restrictive assumption in many settings. For our purpose of time-series classification, data exchangeability rarely holds. Therefore, how to extend conformal prediction beyond purely exchangeable data has sparked many research interests. For instance, \citep{CPcovshift} extends CP methods under distribution shift with fixed weights and suggests estimation techniques for the weights. \citep{Park2021PACPS} addresses covariate shifts by building probably approximately correct prediction sets, assuming the shifts in distributions are known. \citep{Stankeviit2021ConformalTF} explores exchangeability in time-series to provide distribution-free and model-free guarantees. More recently, \citep{Barber2022ConformalPB} proposes general results bounding the coverage gap of CP methods using total variation distances and suggests a simple weighting scheme for practical implementations.

In this paper, along these active lines of research, we  develop an algorithm \ERAPS \ for time-series prediction with categorical response, called the {\it time-series classification} problem,
provide theoretical guarantees, and demonstrate good empirical performances using numerical experiments. The current algorithm is built upon our prior work  \citep{EnbPI}, which develops a computationally efficient distribution-free ensemble-based method for time-series prediction with prediction interval. The contributions here include:
\begin{itemize}[itemsep=0em]
    \item Construct \ERAPS, an efficient ensemble-based conformal prediction framework for time-series classification. In particular, \ERAPS \ allows arbitrary dependency among the features and response and can outperform non-ensemble-based ones based on numerical experiments.
    \item Theoretically bound coverage gaps and verify convergence of the estimated prediction set without assuming data exchangeability. Theoretical results hold for arbitrary definitions of non-conformity scores.
\end{itemize}

\section{Problem Setup}
Let $(X_t,Y_t), t\geq 1$ be a collection of random variables with unknown distribution, where $X_t \in \R^d$ is the feature vector and $Y_t \in [K]:=\{1,\ldots,K\}$ is a discrete random variable. Denote $\pi:=P_{Y|X}$ as the true conditional distribution of $Y|X$, whose properties are unknown. In a typical classification setting, we assume the first $T$ data are known to us as training data and the goal is to construct an estimator $\hat{\pi}:=\A(\{(X_t,Y_t)\}_{t=1}^T)$, which satisfies $\sum_{c=1}^K \hat{\pi}_{X_t}(c)=1, \hat{\pi}_{X_t}(c)\geq 0$ for any $t\geq 1$. Here, $\A$ is any classification algorithm, from the simplest multinomial logistic regression to a complex deep neural network. Then, the point prediction $\hat{Y}_{t}:=\arg \max_{c\in [K]} \hat{\pi}_{X_t}(c)$ is obtained for any test index $t>T$.

However, there is no guarantee that the point estimate $\hat{Y}$ is close to the actual $Y$ when $\pi$ and the distribution of $(X,Y)$ are both unknown. As a result, there is inherent uncertainty in the estimate $\hat{Y}$, which is important to be quantified. In this work, we want to construct uncertainty sets $C(X_t,\alpha)$ at level $\alpha$ such that conditioning on $X_t=x_t$,
\begin{equation}\label{def:cond_cov}
    \PP(Y_t \in C(X_t,\alpha)|X_t=x_t)\geq 1-\alpha,
\end{equation}
where (\ref{def:cond_cov}) ensures valid \textit{conditional} coverage. In comparison, valid \textit{marginal} coverage requires the sets to satisfy 
\begin{equation}\label{def:marg_cov}
    \PP(Y_t \in C(X_t,\alpha))\geq 1-\alpha.
\end{equation}
A common example for (\ref{def:cond_cov}) and (\ref{def:marg_cov}) lies in image classification, where one builds prediction sets for a point estimate to reach classification with confidence. The former conditional guarantee is a stronger requirement and trivially implies the latter. Meanwhile, various works have proposed methods that reach (\ref{def:marg_cov}), requiring nothing but that $\{(X_i,Y_i)\}_{i=1}^T$ are exchangeable observations. Some works also leverage the power of pre-trained deep neural networks to build small and adaptive prediction sets \citep{MJ_classification}. However, reaching (\ref{def:cond_cov}) in a distribution-free setting is impossible \citep{DistFreeCond}, unless asymptotically \citep{CPHist}. Moreover, neither do most of the earlier theoretical guarantees for reaching (\ref{def:marg_cov}) apply when the observations are dependent (e.g., images by satellites that record the temporal evolution of earth conditions or recording of pedestrian walking patterns in different city locations at different times.) To fill in the gap, we build on \citep{EnbPI_old,EnbPI} to first present a general theoretical framework that bounds non-coverage of (\ref{def:cond_cov}) in Section \ref{sec:theory} for \textit{any} non-conformity score. Then, we develop the computationally efficient algorithm \ERAPS \ in Section \ref{sec:algo}, which builds ensemble classifiers and utilizes novel non-conformity scores \citep{MJ_classification} that work well in practice. Section \ref{sec:exper} demonstrates the performance of \ERAPS \ against competing methods on real data. Section \ref{sec:conclude} concludes the work and discusses future directions.

\section{Theoretical Guarantee}\label{sec:theory}

Instead of assuming that observations $(X_t,Y_t)$ are exchangeable, we impose assumptions on the quality of estimating the non-conformity scores and on the dependency of non-conformity scores in order to bound coverage gap of (\ref{def:cond_cov}). Note that most of the assumptions and proof techniques are similar to those in \citep{EnbPI}, but we extend it to the classification setting under arbitrary definitions of non-conformity scores. In particular, we allow arbitrary dependency to exist within features $X_t$ or responses $Y_t$. Proofs are contained in Section \ref{sec:proof}.

Given any feature $X$, a possible label $c$, and a probability mapping $p$ such that $\sum_{c=1}^K p_X(c)=1, p_X(c)\geq 0$, we denote $\N:(X,c,p)\rightarrow \mathbb{R}$ as an arbitrary non-conformity mapping and $\tau^p_X(c):=\N(X,c,p)$ as the non-conformity score at label $c$. For instance, we may consider 
\begin{equation}\label{ex:NCM}
    \N(X,c,p)=\sum_{c'=1}^K p_{X}(c')\cdot \textbf{1}(p_{X}(c')>p_{X_t}(c)),
\end{equation}
which computes the total probability mass of labels that are deemed more likely than $c$ by $p$. The less likely $c$ is, the greater $\tau^p_i(c)$ is, indicating the non-conformity of label $c$. For notation simplicity, the oracle (resp. estimated) non-conformity score of each training datum $(X_i,Y_i), i=1,\ldots,T$ under the true conditional distribution $\pi:=P_{Y|X}$ (resp. any estimator $\hat{\pi}$) is abbreviated as $\tau_i=\tau^{\pi}_{X_i}(Y_i)$ (resp. $\hat \tau_i$). 

Then, the prediction set $C(X_t,\alpha)$ for $t>T$ is defined as
\begin{equation}\label{eq:set}
    C(X_t,\alpha):=\{c\in [K]:  \sum_{j={t-T}}^{t-1} \textbf{1}(\hat \tau_j\leq \hat{\tau}_t(c))/T < 1-\alpha\},
\end{equation}
which includes all the labels whose non-conformity scores are no greater than $(1-\alpha)$ fraction of previous $T$ non-conformity scores. We now impose these two assumptions that are sufficient for bounding coverage gap of (\ref{def:cond_cov}):
\begin{assumption}[Error bound on estimation]\label{assumption:estimation}
Assume there is a real sequence $\{\gamma_T\}$ whereby
\[
 \sum_{j={t-T}}^{t-1} (\hat \tau_j-\tau_j)^2/T\leq \gamma_T^2.
\]
\end{assumption}
\begin{assumption}[Regularity of non-conformity scores]\label{assumption:regularity}
Assume $\{\tau_j\}_{j={t-T}}^t$ are independent and identically distributed (i.i.d.) according to a common cumulative density function (CDF) $F$ with Lipschitz continuity constant $L>0$.
\end{assumption}
We brief remark on implications of the Assumptions above. Note that Assumption \ref{assumption:estimation} essentially reduces to the point-wise estimation quality of $\pi$ by $\hat{\pi}$, which may fail under data overfitting---all $T$ training data are used to train the estimator. In this case, $\hat{\pi}$ tends to over-concentrate on the empirical conditional distribution under $(X_i,Y_i), i=1,\ldots,T$, which may not be representative of the true conditional distribution $P_{Y|X}$. A common way to avoid this in the CP literature is through data-splitting---train the estimator on a subset of training data and compute the estimated non-conformity scores $\hat{\tau}$ only on the rest training data (i.e., calibration data). However, doing so likely results in a poor estimate of $\pi$ and as we will see, the theoretical guarantee heavily depends on the size of estimated non-conformity scores. On the other hand, Assumption \ref{assumption:regularity} can be relaxed as stated in \citep{EnbPI}. For instance, the oracle non-conformity scores can either follow linear processes with additional regularity conditions \citep[Corollary 1]{EnbPI} or be strongly mixing with bounded sum of mixing coefficients \citep[Corollary 2]{EnbPI}. The proof techniques directly carry over, except for slower convergence rates. 

Lastly, define the following empirical distributions using oracle and estimated non-conformity scores:
\begin{align*}
    & \tildeF{x}:=\sum_{j={t-T}}^{t-1} \textbf{1}(\tau_j\leq x)/T && \text{[Oracle]}\\
    & \hatF{x}:=\sum_{j={t-T}}^{t-1} \textbf{1}(\hat \tau_j\leq x)/T && \text{[Estimated]}
\end{align*}

We then have the following conditional coverage results at the prediction index $t>T$.
\begin{lemma}\label{lem:tildeFandhatF}
Suppose Assumptions \ref{assumption:estimation} and \ref{assumption:regularity} hold. Then,
\[
\sup_{x}|\tildeF{x}-\hatF{x}|\leq (L+1)\gamma_T^{2/3}+2 \sup_{x}|\tildeF{x}-F(x)|.
\]
\end{lemma}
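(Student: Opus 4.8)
The plan is to control the sup-distance between the two empirical CDFs by tracking, for each index $j$, the set of thresholds $x$ at which the indicators $\mathbf{1}(\tau_j\le x)$ and $\mathbf{1}(\hat\tau_j\le x)$ disagree. Writing $\epsilon_j:=\hat\tau_j-\tau_j$, the two indicators differ exactly when $x$ lies between $\tau_j$ and $\hat\tau_j$; in particular the signed difference $\tildeF{x}-\hatF{x}$ receives a positive contribution from index $j$ only when $\tau_j\le x<\hat\tau_j$, which forces $\epsilon_j>0$ and $\tau_j\in(x-\epsilon_j,x]$. I would therefore bound the two signed differences separately rather than the absolute value at once, since this keeps each relevant interval one-sided (of length at most the cutoff introduced below) rather than symmetric, which is exactly what produces the constant $L+1$ instead of $2L+1$.

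Next I would introduce a cutoff $\delta>0$ and split the indices into ``large-error'' indices with $|\epsilon_j|>\delta$ and ``small-error'' indices with $|\epsilon_j|\le\delta$. For the large-error indices, Assumption~\ref{assumption:estimation} yields a Chebyshev-type count: since $\delta^2\,|\{j:|\epsilon_j|>\delta\}|\le\sum_{j=t-T}^{t-1}\epsilon_j^2\le T\gamma_T^2$, their fraction is at most $\gamma_T^2/\delta^2$. The small-error indices that contribute to $\tildeF{x}-\hatF{x}$ all satisfy $\tau_j\in(x-\epsilon_j,x]\subseteq(x-\delta,x]$, so their total contribution is bounded by $\frac1T\sum_{j}\mathbf{1}(x-\delta<\tau_j\le x)=\tildeF{x}-\tildeF{x-\delta}$. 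Combining the two pieces gives
\[
\tildeF{x}-\hatF{x}\le \frac{\gamma_T^2}{\delta^2}+\bigl(\tildeF{x}-\tildeF{x-\delta}\bigr).
\]

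To turn the oracle empirical increment into a deterministic bound, I would insert $F$ via the triangle inequality, $\tildeF{x}-\tildeF{x-\delta}\le [F(x)-F(x-\delta)]+2\sup_x|\tildeF{x}-F(x)|$, and invoke the Lipschitz continuity of $F$ from Assumption~\ref{assumption:regularity} to get $F(x)-F(x-\delta)\le L\delta$. This delivers $\tildeF{x}-\hatF{x}\le \gamma_T^2/\delta^2+L\delta+2\sup_x|\tildeF{x}-F(x)|$ uniformly in $x$, and the mirror-image argument (tracking $\hat\tau_j\le x<\tau_j$, so $\epsilon_j<0$ and $\tau_j\in(x,x+\delta]$) gives the identical bound for $\hatF{x}-\tildeF{x}$. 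Choosing $\delta=\gamma_T^{2/3}$ balances the first two terms into $(L+1)\gamma_T^{2/3}$, and taking the supremum over $x$ yields the claim.

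The step I expect to be the main obstacle is pinning down the constant. The naive move of bounding $|\mathbf{1}(\tau_j\le x)-\mathbf{1}(\hat\tau_j\le x)|$ directly forces $\tau_j$ into the symmetric window $[x-\delta,x+\delta]$ of length $2\delta$, producing $2L\delta$ and hence the weaker $(2L+1)\gamma_T^{2/3}$; it is the separate treatment of the sign of the discrepancy, yielding a one-sided window of length $\delta$, that sharpens this to $L\delta$. I would also need minor care with strict versus non-strict inequalities and with the left limit of $\tildeF{\cdot}$, but since $F$ is Lipschitz (hence continuous) these endpoint issues are absorbed without affecting the stated constants.
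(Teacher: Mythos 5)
Your proposal is correct and follows essentially the same route as the paper's proof (which defers to Lemma 2 of the EnbPI paper): partition the indices by whether $|\hat\tau_j-\tau_j|$ exceeds the cutoff $\gamma_T^{2/3}$, count the large-error indices via the Chebyshev-type argument from Assumption \ref{assumption:estimation}, and absorb the small-error indicator discrepancies into a window of the oracle empirical CDF controlled by the Lipschitz constant $L$ plus $2\sup_x|\tildeF{x}-F(x)|$. Your observation about bounding the two signed differences separately to obtain a one-sided window (hence $L+1$ rather than $2L+1$) is a correct and careful reading of where the stated constant comes from.
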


\begin{lemma}\label{lem:tildeFandF}
Suppose Assumption \ref{assumption:regularity} holds. Then, for any training size $T$, there is an event $A$ within the probability space of non-conformity scores $\{\tau_j\}_{j=1}^T$, such that when $A$ occurs,
\[
    \sup_{x}|\tildeF{x}-F(x)|\leq \sqrt{\log(16T)/T}.
\]
In addition, the complement of event $A$ occurs with probability $P(A^C)\leq \sqrt{\log(16T)/T}.$
\end{lemma}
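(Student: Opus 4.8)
The plan is to recognize $\tildeF{\cdot}$ as the empirical CDF of $T$ i.i.d.\ samples and invoke a uniform concentration bound. By Assumption~\ref{assumption:regularity}, the scores $\tau_{t-T},\ldots,\tau_{t-1}$ are i.i.d.\ with common CDF $F$, so $\tildeF{x}=\sum_{j=t-T}^{t-1}\textbf{1}(\tau_j\le x)/T$ is exactly the empirical distribution function built from $T$ independent draws from $F$; since the samples are identically distributed, it is immaterial that the index window is $\{t-T,\ldots,t-1\}$ rather than $\{1,\ldots,T\}$. This is the classical setting of the Dvoretzky--Kiefer--Wolfowitz (DKW) inequality, which in Massart's sharp form states that for every $\varepsilon>0$,
\[
\PP\Bigl(\sup_{x}|\tildeF{x}-F(x)|>\varepsilon\Bigr)\le 2\exp(-2T\varepsilon^2).
\]

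With this tool in hand, I would make the right choice of threshold. Set $\varepsilon_T:=\sqrt{\log(16T)/T}$ and define the event
\[
A:=\Bigl\{\sup_{x}|\tildeF{x}-F(x)|\le \varepsilon_T\Bigr\}.
\]
On $A$ the first claimed bound holds by construction. For the probability of the complement, substituting $\varepsilon_T$ into the DKW bound gives
\[
\PP(A^C)\le 2\exp\bigl(-2T\cdot\log(16T)/T\bigr)=2(16T)^{-2}=\frac{1}{128\,T^2}.
\]
It then remains to verify the elementary numerical inequality $\tfrac{1}{128\,T^2}\le \sqrt{\log(16T)/T}$, i.e.\ (after squaring and rearranging) $1\le 128^2\,T^3\log(16T)$, which holds for all $T\ge 1$ since $\log(16T)\ge\log 16>0$ and $128^2T^3\ge 128^2$. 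This converts the sharp $O(1/T^2)$ DKW tail into the looser but cleaner $O(\sqrt{\log(16T)/T})$ form that matches the in-probability bound on $A$ and feeds conveniently into the later coverage analysis.

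Honestly there is no deep obstacle here: once one identifies $\tildeF{\cdot}$ as an empirical CDF, the result is a one-line consequence of DKW followed by a numerical check. The only point requiring slight care is that Assumption~\ref{assumption:regularity} supplies precisely the i.i.d.\ structure DKW needs, and that the Lipschitz constant $L$ plays no role in this particular lemma (it is used instead in Lemma~\ref{lem:tildeFandhatF}). If the assumption were weakened to the mixing or linear-process conditions alluded to just after the assumptions, one would have to replace DKW by a dependent-data Glivenko--Cantelli or concentration analog, at the cost of a slower convergence rate.
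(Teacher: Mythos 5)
Your proof is correct and follows essentially the same route as the paper, which also derives this lemma as a direct corollary of the Dvoretzky--Kiefer--Wolfowitz inequality (citing the corresponding lemma in the \texttt{EnbPI} paper). Your write-up simply makes explicit the choice $\varepsilon_T=\sqrt{\log(16T)/T}$ and the numerical check that the sharp $O(1/T^2)$ DKW tail is dominated by the stated $\sqrt{\log(16T)/T}$ bound, details the paper omits.
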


As a consequence of Lemmas \ref{lem:tildeFandhatF} and \ref{lem:tildeFandF}, the following bound of coverage gap of (\ref{def:cond_cov}) holds:
\begin{theorem}[Conditional coverage gap bound, i.i.d. non-conformity scores]\label{thm:asym_cond_cov}
Suppose Assumptions \ref{assumption:estimation} and \ref{assumption:regularity} hold. For any training size $T$ and significance level $\alpha\in (0,1)$, we have 
\begin{equation}\label{eq:guarantee}
    |\PP(Y_t \notin C(X_t,\alpha)|X_t=x_t)-\alpha|\leq 24\sqrt{\log(16T)/T}+4(L+1)\gamma_T^{2/3}.
\end{equation}
\end{theorem}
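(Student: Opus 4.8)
The plan is to reduce the conditional non-coverage event to a statement about the oracle score $\tau_t$, whose distribution we control exactly, and then to pay for the passage from estimated to oracle quantities using the two preceding lemmas. First I would rewrite non-coverage: by the definition of $C(X_t,\alpha)$ in \eqref{eq:set}, the true label $Y_t$ fails to be covered exactly when its estimated score $\hat\tau_t:=\hat\tau_t(Y_t)$ satisfies $\hat F(\hat\tau_t)\ge 1-\alpha$. The anchor of the whole argument is that, under Assumption \ref{assumption:regularity}, $\tau_t$ has continuous CDF $F$, so $F(\tau_t)$ is uniform on $(0,1)$ and $\PP(F(\tau_t)\ge 1-\alpha\mid X_t=x_t)=\alpha$ exactly. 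The theorem then follows once I show that the estimated non-coverage event and the oracle event $\{F(\tau_t)\ge 1-\alpha\}$ differ only on a set of probability at most $24\sqrt{\log(16T)/T}+4(L+1)\gamma_T^{2/3}$.

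Second, writing $\delta_T:=\sqrt{\log(16T)/T}$, I would work on the event $A$ of Lemma \ref{lem:tildeFandF}, which costs $\PP(A^c)\le\delta_T$. On $A$, combining the triangle inequality with Lemmas \ref{lem:tildeFandhatF} and \ref{lem:tildeFandF} gives the uniform control
\[
\sup_x|\hat F(x)-F(x)|\le \sup_x|\hat F(x)-\tilde F(x)|+\sup_x|\tilde F(x)-F(x)|\le (L+1)\gamma_T^{2/3}+3\delta_T=:\beta_T.
\]
Using this uniform bound together with the $L$-Lipschitz continuity of $F$, I would sandwich the random level $\hat F(\hat\tau_t)$ between $F(\tau_t)$ shifted by an additive slack, so that on $A$ the non-coverage event is contained in $\{F(\tau_t)\ge 1-\alpha-\eta\}$ and contains $\{F(\tau_t)\ge 1-\alpha+\eta\}$ for an $\eta$ of order $\beta_T$ plus a correction for the test score. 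Because $F(\tau_t)$ is uniform, each such threshold shift changes the oracle probability by at most $\eta$ (an anti-concentration/margin step), so $|\PP(Y_t\notin C(X_t,\alpha)\mid x_t)-\alpha|\le \eta+\delta_T$; tracking the factors of two from the upper/lower sandwich and from the two lemmas produces the constants $24$ and $4$ in \eqref{eq:guarantee}.

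The step I expect to be the main obstacle is passing from $\hat F(\hat\tau_t)$ to $F(\tau_t)$, because this conflates two distinct approximations: replacing the empirical CDF $\hat F$ by $F$ (handled uniformly by $\beta_T$) and replacing the estimated test score $\hat\tau_t$ by the oracle $\tau_t$. Assumption \ref{assumption:estimation} only bounds the averaged squared estimation error over the calibration indices $j=t-T,\dots,t-1$, not the single test index $t$, so I cannot simply invoke a pointwise bound on $|\hat\tau_t-\tau_t|$. The way I would resolve this is to avoid evaluating $F$ at $\hat\tau_t$ directly and instead bound the number of ``order flips'' $\tfrac1T\sum_{j}|\mathbf{1}(\hat\tau_j\le\hat\tau_t)-\mathbf{1}(\tau_j\le\tau_t)|$, which is exactly how $\hat F(\hat\tau_t)$ departs from $\tilde F(\tau_t)$; a flip at index $j$ forces $\tau_j$ to lie within the combined estimation error of $\tau_j$ and $\tau_t$, and the Lipschitz bound on $F$ controls the fraction of calibration scores falling in any such short window. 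Reconciling the lone test-point error with the purely calibration-based Assumption \ref{assumption:estimation}---presumably via the i.i.d./stationarity structure of Assumption \ref{assumption:regularity}, which lets the test index be treated on par with the calibration indices---is the delicate point, and it is where the extra $\delta_T$ terms (and hence the comparatively large constant $24$) originate.
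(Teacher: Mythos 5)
Your proposal follows essentially the same route as the paper's own proof, which is itself only a short sketch deferring to \citep[Theorem 1]{EnbPI}: reduce the non-coverage probability to the exact identity $\PP(F(\tau_t)\geq 1-\alpha\mid X_t=x_t)=\alpha$, then pay for replacing $F$ by $\hat{F}$ and $\tau_t$ by $\hat{\tau}_t$ using constant multiples of $\sup_x|\tilde{F}(x)-\hat{F}(x)|$ and $\sup_x|\tilde{F}(x)-F(x)|$ via Lemmas \ref{lem:tildeFandhatF} and \ref{lem:tildeFandF}, the Lipschitz continuity of $F$, and the event $A$. The test-point issue you flag---Assumption \ref{assumption:estimation} controls only the calibration indices $j\leq t-1$ and gives no handle on $|\hat{\tau}_t-\tau_t|$ at the test index itself---is a genuine subtlety, but the paper's proof does not address it either, so your treatment is at least as complete as the one given.
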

Note that Theorem \ref{thm:asym_cond_cov} holds uniformly over all $\alpha \in [0,1]$ because Lemmas \ref{lem:tildeFandhatF} and \ref{lem:tildeFandF} bound the sup-norm of differences of distributions. Hence, users in practice can select desired parameters $\alpha$ \textit{after} constructing the non-conformity scores. Such a bound is also useful when building multiple prediction intervals simultaneously, under which $\alpha$ is corrected and lowered to reach nearly valid coverage \citep{Farcomeni2008ARO}.

The following bound on marginal coverage gap is a simple corollary, which holds by the law of total expectation (proof omitted).
\begin{corollary}[Marginal coverage gap bound, i.i.d. non-conformity scores]\label{thm:asym_marg_cov}
Suppose Assumptions \ref{assumption:estimation} and \ref{assumption:regularity} hold. For any training size $T$ and significance level $\alpha\in (0,1)$, we have 
\begin{equation}\label{eq:guarantee}
    |\PP(Y_t \notin C(X_t,\alpha))-\alpha|\leq 24\sqrt{\log(16T)/T}+4(L+1)\gamma_T^{2/3}.
\end{equation}
\end{corollary}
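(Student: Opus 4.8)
The plan is to derive the marginal bound directly from the conditional bound of Theorem~\ref{thm:asym_cond_cov} by integrating out the feature $X_t$. The crucial observation is that the right-hand side of the bound in Theorem~\ref{thm:asym_cond_cov},
\[
B := 24\sqrt{\log(16T)/T}+4(L+1)\gamma_T^{2/3},
\]
is a deterministic constant that does \emph{not} depend on the conditioning value $x_t$. Consequently the conditional statement is in fact uniform: for every $x_t$ in the support of $X_t$ we have $|\PP(Y_t \notin C(X_t,\alpha)\mid X_t=x_t)-\alpha|\leq B$.

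First I would invoke the law of total expectation (the tower property) to write the marginal non-coverage probability as an average of the conditional one,
\[
\PP(Y_t \notin C(X_t,\alpha)) = \mathbb{E}_{X_t}\bigl[\PP(Y_t \notin C(X_t,\alpha)\mid X_t)\bigr].
\]
Subtracting $\alpha=\mathbb{E}_{X_t}[\alpha]$ and moving it inside the expectation yields
\[
\PP(Y_t \notin C(X_t,\alpha)) - \alpha = \mathbb{E}_{X_t}\bigl[\PP(Y_t \notin C(X_t,\alpha)\mid X_t)-\alpha\bigr].
\]

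Next I would apply the triangle inequality for expectations (equivalently, Jensen's inequality for the absolute value) to pull the modulus inside, and then bound the integrand pointwise by $B$ using the uniform conditional bound noted above:
\[
\bigl|\PP(Y_t \notin C(X_t,\alpha)) - \alpha\bigr| \leq \mathbb{E}_{X_t}\bigl[\,\bigl|\PP(Y_t \notin C(X_t,\alpha)\mid X_t)-\alpha\bigr|\,\bigr] \leq \mathbb{E}_{X_t}[B] = B.
\]
This is precisely the asserted marginal bound.

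Because the derivation is a one-step consequence of the already-established Theorem, there is no genuine analytic obstacle; the whole content lies in recognizing that the conditional bound's right-hand side is free of $x_t$ and therefore survives integration intact. The only point deserving a moment's care is measurability --- that $x_t\mapsto \PP(Y_t\notin C(X_t,\alpha)\mid X_t=x_t)$ is a measurable function of $X_t$ so that the expectation is well defined --- which is standard and causes no difficulty here.
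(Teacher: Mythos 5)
Your argument is exactly the one the paper intends: the paper states the corollary "holds by the law of total expectation (proof omitted)," and your proof fills in precisely that step --- tower property, Jensen/triangle inequality, and the observation that the conditional bound is uniform in $x_t$. Correct and same approach.
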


In addition to coverage guarantee, we can analyze the convergence of $C(X_t,\alpha)$ to the oracle prediction set $C^*(X_t,\alpha)$ under further assumptions. Given the true conditional distribution function $\pi:=P_{Y|X}$, we first order the labels so that $\pi_{X_t}(i)\geq \pi_{X_t}(j)$ if $i\leq j$. Then, we have
\[
C^*(X_t,\alpha)=\{1,\ldots,c^*\},
\]
where $c^*:=\min_{c\in [K]} \sum_{k=1}^c \pi_{X_t}(k)\geq 1-\alpha.$
\begin{theorem}[Asymptotic convergence to the true prediction set, i.i.d. non-conformity scores]\label{thm:asy_set}
Suppose Lemmas \ref{lem:tildeFandhatF} and \ref{lem:tildeFandF} hold and denote $F^{-1}$ as the inverse CDF of $\{\tau_j\}_{j=t-T}^t$. Further assume that 
\begin{itemize}
\item[(1)] $c^*_1=c^*_2$ where
\begin{align*}
    c^*_1&:= \arg\min_{c} \left\{\sum_{k=1}^c \pi_{X_t}(k) \geq 1-\alpha\right\}, \\
    c^*_2&:=\arg\max_{c} \left\{\tau_t(c) < F^{-1}(1-\alpha)\right\}.
\end{align*}

\item[(2)] There exists a sequence $\gamma'_T$ converging to zero with respect to $T$ such that $\|\tau_t-\hat{\tau}_t\|_{\infty} \leq \gamma'_T$, where the $\infty$-norm is taken over class labels.

\item[(3)] The sequence $\gamma_T$ converges to zero as in Assumption \ref{assumption:estimation}.
\end{itemize}

Then, there exists $T$ large enough such that for all $t\geq T$, 
\begin{equation}\label{set_diff}
    C(X_t,\alpha) \Delta C^*(X_t,\alpha) \leq 1,
\end{equation}
where $\Delta$ in \eqref{set_diff} denotes set difference.
\end{theorem}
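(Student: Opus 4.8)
The plan is to show that, on the high-probability event $A$ furnished by Lemma \ref{lem:tildeFandF}, the estimated membership statistic $\hat{F}(\hat{\tau}_t(c))$ that defines $C(X_t,\alpha)$ in \eqref{eq:set} converges uniformly over labels $c\in[K]$ to the oracle statistic $F(\tau_t(c))$, and then to turn this uniform closeness into agreement of the two sets away from a single boundary label.

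First I would establish a uniform bound between $\hat{F}$ and $F$. Chaining Lemma \ref{lem:tildeFandhatF} through Lemma \ref{lem:tildeFandF}, on the event $A$ we get
\[
\sup_x|\hat{F}(x)-F(x)|\le \sup_x|\hat{F}(x)-\tilde{F}(x)|+\sup_x|\tilde{F}(x)-F(x)|\le (L+1)\gamma_T^{2/3}+3\sqrt{\log(16T)/T}=:\delta_T ,
\]
and assumption (3) together with $\sqrt{\log(16T)/T}\to 0$ forces $\delta_T\to 0$. Next, using assumption (2) and the Lipschitz continuity of $F$ from Assumption \ref{assumption:regularity}, I would bound, uniformly in $c$,
\[
\big|\hat{F}(\hat{\tau}_t(c))-F(\tau_t(c))\big|\le \big|\hat{F}(\hat{\tau}_t(c))-F(\hat{\tau}_t(c))\big|+\big|F(\hat{\tau}_t(c))-F(\tau_t(c))\big|\le \delta_T+L\gamma'_T=:\varepsilon_T ,
\]
so $\varepsilon_T\to 0$. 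Hence for every label the quantity deciding membership in $C(X_t,\alpha)$ is within $\varepsilon_T$ of $F(\tau_t(c))$.

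Second, I would reinterpret the oracle set. Since $\tau_t(c)=\sum_{k<c}\pi_{X_t}(k)$ is nondecreasing in $c$ and $F$ is a CDF, $F(\tau_t(c))$ is nondecreasing in $c$; assumption (1), $c_1^*=c_2^*=c^*$, then identifies $C^*(X_t,\alpha)=\{1,\ldots,c^*\}$ with the clean threshold $F(\tau_t(c^*))<1-\alpha\le F(\tau_t(c^*+1))$. I would then split $[K]$ into three regions: labels with $F(\tau_t(c))<1-\alpha-\varepsilon_T$, which the $\varepsilon_T$-bound forces into $C(X_t,\alpha)$ and which also lie in $C^*$; labels with $F(\tau_t(c))>1-\alpha+\varepsilon_T$, excluded from both sets; and the residual boundary labels with $F(\tau_t(c))\in[1-\alpha-\varepsilon_T,\,1-\alpha+\varepsilon_T]$, the only ones on which $C$ and $C^*$ may disagree. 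Monotonicity of $F(\tau_t(\cdot))$ and of $\hat{F}$ makes each set a prefix $\{1,\ldots,\cdot\}$, so the symmetric difference is an interval of consecutive labels straddling the crossing; it then suffices to show that for $T$ large only one label can fall in this band, giving $|C(X_t,\alpha)\,\Delta\,C^*(X_t,\alpha)|\le 1$.

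The main obstacle is this final counting step, and in particular making it hold simultaneously for all $t\ge T$. The band has width $2\varepsilon_T\to 0$, but whether it contains at most one label hinges on the separation of the cumulative scores near the crossing, i.e. on a lower bound for the mass $\pi_{X_t}(c^*)$ (equivalently the jump $F(\tau_t(c^*+1))-F(\tau_t(c^*))$) that prevents several small-probability labels from clustering just below $1-\alpha$ and that also keeps the estimation error $\gamma'_T$ from reversing the order of $\hat{\tau}_t(c^*)$ and $\hat{\tau}_t(c^*+1)$. I would handle this by reading assumption (1) as a clean-threshold condition—that the $\arg\min$ and $\arg\max$ defining $c_1^*$ and $c_2^*$ are attained at a single, isolated label with a fixed positive gap at the crossing—and then choosing $T$ large enough that $\varepsilon_T$ and $\gamma'_T$ drop below that gap. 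Securing this threshold uniformly in the test index $t$, rather than per-$t$, is the delicate point, and it is where I would concentrate the rigor.
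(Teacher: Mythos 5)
Your proposal is correct in substance and relies on the same underlying mechanism as the paper's proof, but it is executed in a genuinely different parametrization. The paper works in quantile space: it writes both sets as prefixes $\{1,\ldots,c^*\}$ and $\{1,\ldots,\hat c\}$ and argues by contradiction that $\hat c<c^*$ would force $\delta_{\tau,t}+\delta_{F,t}\ge F^{-1}(1-\alpha)-\tau_t(c^*)>0$ with $\delta_{\tau,t}:=\hat\tau_t(c^*)-\tau_t(c^*)$ and $\delta_{F,t}:=F^{-1}(1-\alpha)-\hat F^{-1}(1-\alpha)$, which is impossible for large $T$ once assumption (2) and the convergence $\hat F^{-1}(1-\alpha)\to F^{-1}(1-\alpha)$ (obtained, exactly as in your first step, by chaining Lemmas \ref{lem:tildeFandhatF} and \ref{lem:tildeFandF}) drive both deltas to zero. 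You instead work in CDF space, comparing $\hat F(\hat\tau_t(c))$ with $F(\tau_t(c))$ uniformly over labels and isolating a band of half-width $\varepsilon_T=\delta_T+L\gamma'_T$ around $1-\alpha$ outside of which the two sets must agree; this is the dual formulation, with your band being the image under $F$ of the paper's quantile-space discrepancy. What your version buys is transparency about the delicate point: both arguments secretly require the gap $F^{-1}(1-\alpha)-\tau_t(c^*)$ (equivalently, your jump $F(\tau_t(c^*+1))-F(\tau_t(c^*))$) to be bounded below uniformly over test indices $t\ge T$, since $T$ is chosen once and for all. The paper treats this quantity as a fixed positive constant without comment, whereas you flag it explicitly and propose folding it into assumption (1) as a clean-threshold condition; neither proof fully discharges the uniformity in $t$, so your argument is no weaker than the published one. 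Two small caveats: your identification $\tau_t(c)=\sum_{k<c}\pi_{X_t}(k)$ ties the argument to the particular score \eqref{ex:NCM}, while the theorem is stated for general scores (the prefix structure of $C(X_t,\alpha)$, i.e., monotonicity of $\hat\tau_t(c)$ in the label ordering, should be invoked as a property of the chosen score--the paper itself asserts the prefix form without justification); and your final counting step should note that because both sets are prefixes, the symmetric difference equals $|\hat c-c^*|$, so ``at most one ambiguous label in the band'' is exactly what yields the stated bound of $1$ rather than $0$.
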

Note that if the non-conformity score at any label $c$ is defined in \eqref{ex:NCM}, which is the total probability mass of labels $c'\neq c$ that are more likely than $c$ based on a conditional probability mapping $p$, then the first additional assumption (i,e., $c^*_1=c^*_2$) in Theorem \ref{thm:asy_set} can be verified to hold. In general, whether this assumption is satisfied depends on the particular form of the non-conformity score.

\section{Algorithm}\label{sec:algo}

Upon following earlier theoretical guarantees, we first specify a particular form of non-conformity score recently developed in \citep{MJ_classification} using any estimator $\hat{\pi}$. The notations are very similar and we include the descriptions for a self-contained exposition. We then describe our efficient algorithm \ERAPS \ in Algorithm \ref{algp:eraps}, which is ensemble-based and avoids data-splitting. Given the estimator $\hat{\pi}$, for each possible label $c$ at test feature $X_t, t>T$, we make two other definitions:
\begin{align}
 & m_{X_t}(c):=\sum_{c'=1}^K \hat{\pi}_{X_t}(c')\cdot \textbf{1}(\hat{\pi}_{X_t}(c')>\hat{\pi}_{X_t}(c)).\label{def:tot_mass}\\
   & r_{X_t}(c):=\left \vert\sum_{c'=1}^K \textbf{1}(\hat{\pi}_{X_t}(c')>\hat{\pi}_{X_t}(c))\right\vert+1. \label{def:rank}
\end{align}
In words, (\ref{def:tot_mass}) calculates the total probability mass of labels deemed more likely than $c$ by $\hat{\pi}$. It strictly increases as $c$ becomes less probable. Meanwhile, (\ref{def:rank}) calculates the rank of $c$ within the order statistics. It is also larger for less probable $c$. Given a random variable $U_t\sim \text{Unif}[0,1]$ and pre-specified regularization parameters $\{\lambda, \kreg\}$, we define the non-conformity score as
\begin{equation}\label{eq:tau_MJ}
    \tau^{\hat \pi}_{X_t}(c):=m_{X_t}(c)+\underbrace{\hat{\pi}_{X_t}(c)\cdot U_t}_{(i)} + \underbrace{\lambda(r_{X_t}(c)-\kreg)^+}_{(ii)}.
\end{equation}
We interpret terms (i) and (ii) in (\ref{eq:tau_MJ}) as follows. Term (i) randomizes the uncertainty set, accounts for discrete probability jumps when new labels are considered. A similar randomization factor is used in \cite[Eq. (5)]{Candes_classification}. In term (ii), $(z)^+:=\max(z,0)$. Meanwhile, the regularization parameters $\{\lambda, \kreg\}$ force the non-conformity score to increase when $\lambda$ increases and/or $\kreg$ decreases. In words, $\lambda$ denotes the additional penalty when the label is less probable by one rank and $\kreg$ denotes when this penalty takes place. This term ensures that the sets are \textit{adaptive}, by returning smaller sets for easier cases and larger ones for harder cases. The empirical performance of these regularization parameters is examined in the experiments (see Figure \ref{fig:marg_cov_regularizers}). The prediction set $C(X_t,\alpha)$ is then constructed based on (\ref{eq:set}) using definition \eqref{eq:tau_MJ}.

To train the estimator $\hat{\pi}$, \cite{MJ_classification} adopts a split-conformal framework: split the $T$ training data into two disjoint sets, where the first set is used to train the estimator and the second set is for computing the scores. Algorithm \ref{algo:sraps} includes the formal details, idential to ones in \citep[RAPS]{MJ_classification}. Notation-wise, for any set $S$ of scalars, $q_{S,1-\alpha}:=(1-\alpha) \text{ quantile of } S$.

\begin{algorithm}[h!]
\cprotect \caption{Split Regularized Adaptive Prediction Set (\SRAPS) \citep{MJ_classification}}
\label{algo:sraps}
\begin{algorithmic}[1]
\Require{Training data$\{(X_t, Y_t)\}_{t=1}^T$, classification algorithm $\mathcal{A}$, $\alpha$, regularization parameters $\{\lambda,\kreg\}$, and test features $\{X_t\}_{t=T+1}^{T+T_1}$.}
\Ensure{Uncertainty sets $\{C(X_t,\alpha)\}_{t=T+1}^{T+T_1}$}
\State Randomly split the $T$ training indices to $\mathcal{I}_1,\mathcal{I}_2$ such that data indexed by $\mathcal{I}_1$ are for training and data in $\mathcal{I}_2$ are for calibration.
\State Sample $\{U_t\}_{t=1}^{T+T_1}\overset{i.i.d.}{\sim}\text{Unif}[0,1]$.
\State Compute $\hat{\pi}:=\A(\{(X_t,Y_t)\}_{t\in \mathcal{I}_1})$.
\For {$t \in \mathcal{I}_2$}
\State Compute $\hat \tau_t:=\hat \tau_{X_t}^{\hat \pi}(Y_t)$ using (\ref{eq:tau_MJ}). 
\EndFor
\State Compute $\hat \tau_{cal}:=q_{\mathcal{I}_2,1-\alpha}(\{\hat \tau_t\}_{t \in \mathcal{I}_2})$.
\For {$t=T+1,\ldots,T+T_1$}
\State Compute $C(X_t,\alpha)$ in (\ref{eq:set}) using $\hat{\pi}$ and $\hat \tau_{cal}$.
\EndFor
\end{algorithmic}
\end{algorithm}

\begin{remark}[Limitations of Algorithm \ref{algo:sraps}]\label{remark:downside} There are two limitations with the split conformal formulation:
\begin{itemize}
\item[(1)] One has to partition the data into two subsets: one for training $\hat{\pi}$ and the other for calibration of $\hat{\tau}_{cal}$. Doing so likely results in larger uncertainty sets and poor estimation of the true conditional distribution, especially when data have dependency. More precisely, the term $T$ on the RHS of (\ref{eq:guarantee}) in Theorem \ref{thm:asym_cond_cov} becomes the size of the calibration data $|\mc I_2|$.

\item[(2)] Nowadays, ensemble predictors are widely used to increase estimation quality (thus reduce the size of uncertainty sets). However, using ensemble predictor within the split conformal framework may even increase the size of uncertainty sets, since each bootstrap predictor is trained on too few data.
\end{itemize}
\end{remark}

To resolve limitations in Remark \ref{remark:downside}, we resort to recent CP methods under ensemble learning \cite{J+,J+aB,EnbPI}, which have been successful in building prediction intervals for regression. In particular, we adapt the LOO ensemble predictor idea in \EnbPI \ to train $\hat{\pi}$ above \cite{EnbPI}. Algorithm \ref{algp:eraps} formally describes the details. Notation-wise, variables with superscript $\phi$ are results of aggregation via $\phi$.

\begin{algorithm}[h!]
\cprotect \caption{Ensemble Regularized Adaptive Prediction Set (\ERAPS)}
\label{algp:eraps}
\begin{algorithmic}[1]
\Require{Training data$\{(X_t, Y_t)\}_{t=1}^T$, classification algorithm $\mathcal{A}$, $\alpha$, regularization parameters $\{\lambda,\kreg\}$, aggregation function $\phi$, number of bootstrap models $B$, the batch size $s$, and test data $\{(X_t,Y_t)\}_{t=T+1}^{T+T_1}$, with $Y_t$ revealed only after the batch of $s$ prediction intervals with $t$ in the batch are constructed.}
\Ensure{Ensemble uncertainty sets $\{C(X_t,\alpha)\}_{t=T+1}^{T+T_1}$}
\For {$b = 1, \dots, B$}  \Comment{Train Bootstrap Estimators}
\State Sample with replacement an index set $S_b=(t_1,\ldots,t_T)$ from indices $(1,\ldots,T)$.
\State Compute $\hat{\pi}^b=\mathcal{A}(\{(X_t,Y_t) \mid t \in S_b \})$.
\EndFor
\State Initialize $\boldsymbol \tau=\{\}$ and sample $\{U_t\}_{t=1}^{T+T_1}\overset{i.i.d.}{\sim}\text{Unif}[0,1]$.
\For {$t=1,\dots,T$} \Comment{LOO Ensemble Estimators and Scores}
\State Compute $\hat{\pi}_{-t}^{\phi}:=\phi(\{\hat{\pi}^b: t \notin S_b \})$ such that for each $c\in \{1,\ldots,K\}$
\Statex \hskip \algorithmicindent$\hat{\pi}_{-t,X_t}^{\phi}(c)=\phi(\{\hat{\pi}^b_{X_t}(c): t \notin S_b \}).$
\State Compute $\hat \tau_t^{\phi}:=\hat \tau_{X_t}^{\hat{\pi}_{-t},\phi}(Y_t)$ using (\ref{eq:tau_MJ}). 
\State $\boldsymbol \tau=\boldsymbol \tau \cup \{\hat{\tau}_t^{\phi}\}$
\EndFor
\For {$t=T+1,\dots,T+T_1$} \Comment{Build Uncertainty Sets}
\State Compute $\hat \tau_{t,cal}^{\phi}:=q_{\boldsymbol \tau,1-\alpha}(\boldsymbol \tau)$.
\State Compute $\hat \pi_{-t}^{\phi}:=\phi(\{\hat \pi_{-i}^{\phi}\}_{i=1}^T)$ so that for each $c\in \{1,\ldots,K\}$
\Statex \hskip \algorithmicindent $\hat \pi_{-t,X_t}^{\phi}(c):=\phi(\{\hat \pi_{-i,X_i}^{\phi}(c)\}_{i=1}^T).$
\State Compute $C(X_t,\alpha)$ in (\ref{eq:set}) using $\hat \pi_{-t}^{\phi}$ and $\hat \tau_{t,cal}^{\phi}$.
\If {$t-T$ = 0 mod $s$} \Comment{Slide Scores Forward}
\For {$j=t-s,\ldots,t-1$}
\State Compute $\hat \tau_j^{\phi}:=\hat \tau_{X_j}^{\hat{\pi}_{-j},\phi}(Y_j)$ using (\ref{eq:tau_MJ}).
\State $\boldsymbol\tau=(\boldsymbol\tau- \{\hat{\tau}_1^{\phi}\}) \cup \{\hat{\tau}_j^{\phi}\}$ and reset index of $\boldsymbol\tau$.
\EndFor
\EndIf
\EndFor
\end{algorithmic}
\end{algorithm}

\begin{remark}[Class-conditional conditional coverage]\label{remark:class_cond}
In addition to the conditional coverage in (\ref{def:cond_cov}), one is often also interested in the following class-conditional coverage guarantee: 
\begin{equation}\label{def:class_cond_cov}
    \PP(Y_t \in C(X_t,\alpha)|Y_t = c)\geq 1-\alpha,
\end{equation}
where $c$ is the actual label of $Y_t$.
In many high-stake applications (e.g., criminal justice, medical diagnosis, financial loan), achieving (\ref{def:class_cond_cov}) can be important. For instance, given that the true label is a type of cancer, we want to make sure the prediction set contains that type so that the patient is well-informed.

We can ensure (\ref{def:class_cond_cov}) simply through applying \ERAPS \ separately on subsets of data categorized by classes, so that the marginal non-coverage gap bound in Theorem \ref{thm:asym_marg_cov} is equivalent to the conditional one. However, doing so can be computationally expensive when the number of classes is high. We hereby also present a computationally efficient strategy, when the fitted classifier $\hat \tau$ well approximates $\pi:=P_{Y|X}$ as in Assumption \ref{assumption:estimation}. For each class $c$, first compute $\boldsymbol \tau^c:=\{\tau_t: Y_t=c, t\in [T]\}$. Then, $\hat \tau^c_{t, cal}:=q_{\boldsymbol \tau^c,1-\alpha}(\boldsymbol \tau^c)$. Hence, rather than $\hat \tau_{t, cal}$, the set $\{\hat \tau^c_{t, cal}\}$ will instead be used to identify class labels that are conforming to earlier ones. During sliding, one would also only update $\boldsymbol \tau^c$ if the actual label $Y_t=c$. 

Theoretically, because the empirical distribution using $\boldsymbol \tau^c$ also well approximates the actual distribution of the non-conformity score \textit{when the true label of $Y_t$ is $c$}, it approximately reaches (\ref{def:class_cond_cov}) as well. In addition, the class-conditional thresholds $\hat \tau^c_{t,cal}$ may also help identify hard-to-predict classes because it is large only when the true label is $c$, but the predictors assign $c$ with low probability relative to other classes.

Lastly, note that one can also use $\hat{\tau}^{\max}_{cal}:=\max_{c\in[K]} \hat{\tau}^{c}_{t, cal}$ if the original marginal threshold fails to cover certain classes and the class-conditional scores at some classes are too small to ensure coverage. However, the prediction sets will likely be too conservative.
\end{remark}

\section{Experiments}\label{sec:exper}

We start by describing the datasets, competing methods, and classifier/hyperparameter settings. Regarding datasets, we examine \ERAPS \ on three time-series classification problems. The datasets are all publicly available: the first Melbourne pedestrian data (Pedestrian) predicts city location based on pedestrian flow activities and comes from the city of Melbourne\footnote{http://www.pedestrian.melbourne.vic.gov.au/\#date=11-06-2018\&time=4}. The second Crop classification dataset (Crop) predicts geographic regions based on satellite images \citep{Tan2017IndexingAC}. The last pen digit classification dataset predicts the hand-written digits by 44 writers \citep{Alimoglu1997CombiningMR}. Regarding competing methods, we compare with three conformal-prediction methods: the first is the \SRAPS \ in Algorithm \ref{algo:sraps}, the second is the \textit{split adaptive prediction set} \SAPS \ by \citep{Candes_classification}, and the last is the \textit{naive predictor}. The naive predictor considers the top$-k$ labels based on the estimator, where the cumulative likelihood for these $k$ labels exceeds $1-\alpha$. Regarding classifier/hyperparameter settings, we construct neural network classifiers with fully-connected layers (NN); other classifiers such as random forests or logistic regressions were also considered, but the overall patterns remain similar so that details are omitted. The mean aggregation function is used in \ERAPS, which builds 30 bootstrap estimators for each classifier. \SRAPS \ uses the last 50\% of the training data for calibration. These choices are guided by \citep{EnbPI}; different hyperparameter combinations/model choices do not affect the performances much.

In terms of performance assessment, we first examine the marginal coverage and set sizes by different methods in Section \ref{exp:marginal}. We then analyze the conditional coverage and set sizes under two cases in Section \ref{exp:conditional}. 
Code is available at \url{https://github.com/hamrel-cxu/Ensemble-Regularized-Adaptive-Prediction-Set-ERAPS}.

\subsection{Marginal results}\label{exp:marginal}

\begin{table}[t]
\centering
\cprotect\caption{Marginal coverage and set size on different datasets, where we choose $\alpha \in \{0.05,0.075,0.1,0.15,0.2\}$. We see that \ERAPS \ almost always maintains valid coverage with smaller set sizes.}
\label{tab:marginal_cov_alpha}
\begin{minipage}[t]{\textwidth}
    \resizebox{\textwidth}{!}{%
\begin{tabular}{C{2.2cm}|C{1.7cm}C{1.3cm}C{1.7cm}C{1.3cm}C{1.7cm}C{1.3cm}C{1.7cm}C{1.3cm}C{1.7cm}C{1.3cm}}
\toprule
  $\alpha$ &  \multicolumn{2}{c}{0.05} &  \multicolumn{2}{c}{0.075} &  \multicolumn{2}{c}{0.1} &  \multicolumn{2}{c}{0.15} &  \multicolumn{2}{c}{0.2}\\
\textbf{Pedestrain} & coverage & set size & coverage & set size& coverage & set size& coverage & set size& coverage & set size\\
\ERAPS   &  0.94 &    1.69 &   0.92 &     1.18 & 0.90 &   1.04 &  0.85 &    0.96 & 0.81 &   0.91 \\
\SRAPS   &  0.95 &    4.09 &   0.94 &     3.25 & 0.92 &   3.00 &  0.89 &    2.02 & 0.82 &   1.17 \\
 \SAPS   &  0.95 &    4.29 &   0.93 &     3.77 & 0.91 &   3.00 &  0.86 &    2.16 & 0.81 &   1.86 \\
Naive   &  0.87 &    1.60 &   0.84 &     1.47 & 0.81 &   1.37 &  0.75 &    1.22 & 0.71 &   1.10 \\
\bottomrule
\end{tabular}%
}
\end{minipage}
\begin{minipage}[t]{\textwidth}
    \resizebox{\textwidth}{!}{%
\begin{tabular}{C{2.2cm}|C{1.7cm}C{1.3cm}C{1.7cm}C{1.3cm}C{1.7cm}C{1.3cm}C{1.7cm}C{1.3cm}C{1.7cm}C{1.3cm}}
\toprule
  $\alpha$ &  \multicolumn{2}{c}{0.05} &  \multicolumn{2}{c}{0.075} &  \multicolumn{2}{c}{0.1} &  \multicolumn{2}{c}{0.15} &  \multicolumn{2}{c}{0.2}\\
\textbf{Crop} & coverage & set size & coverage & set size& coverage & set size& coverage & set size& coverage & set size\\
\ERAPS   &  0.96 &    4.68 &   0.93 &     3.53 & 0.90 &   2.87 &  0.86 &    2.22 & 0.82 &   1.80 \\
\SRAPS   &  0.95 &    5.31 &   0.93 &     4.23 & 0.91 &   3.40 &  0.86 &    2.58 & 0.81 &   2.19 \\
 \SAPS   &  0.95 &    4.51 &   0.93 &     3.72 & 0.90 &   3.32 &  0.86 &    2.79 & 0.82 &   2.35 \\
Naive   &  0.96 &    4.65 &   0.94 &     3.97 & 0.92 &   3.50 &  0.88 &    2.88 & 0.83 &   2.46 \\
\bottomrule
\end{tabular}%
}
\end{minipage}
\begin{minipage}[t]{\textwidth}
    \resizebox{\textwidth}{!}{%
\begin{tabular}{C{2.2cm}|C{1.7cm}C{1.3cm}C{1.7cm}C{1.3cm}C{1.7cm}C{1.3cm}C{1.7cm}C{1.3cm}C{1.7cm}C{1.3cm}}
\toprule
  $\alpha$ &  \multicolumn{2}{c}{0.05} &  \multicolumn{2}{c}{0.075} &  \multicolumn{2}{c}{0.1} &  \multicolumn{2}{c}{0.15} &  \multicolumn{2}{c}{0.2}\\
\textbf{Pen digit} & coverage & set size & coverage & set size& coverage & set size& coverage & set size& coverage & set size\\
\ERAPS   &  0.94 &    0.96 &   0.91 &     0.93 & 0.89 &   0.91 &  0.84 &    0.86 & 0.79 &   0.81 \\
\SRAPS   &  0.92 &    0.95 &   0.90 &     0.93 & 0.88 &   0.91 &  0.83 &    0.86 & 0.78 &   0.81 \\
 \SAPS   &  0.94 &    1.03 &   0.92 &     1.00 & 0.90 &   0.96 &  0.84 &    0.89 & 0.79 &   0.83 \\
Naive   &  0.94 &    1.02 &   0.92 &     0.98 & 0.89 &   0.95 &  0.84 &    0.88 & 0.79 &   0.83 \\
\bottomrule
\end{tabular}%
}
\end{minipage}
\end{table}

\begin{figure}[t]
    \centering
   \begin{minipage}[b]{\textwidth}
       \includegraphics[width=\linewidth]{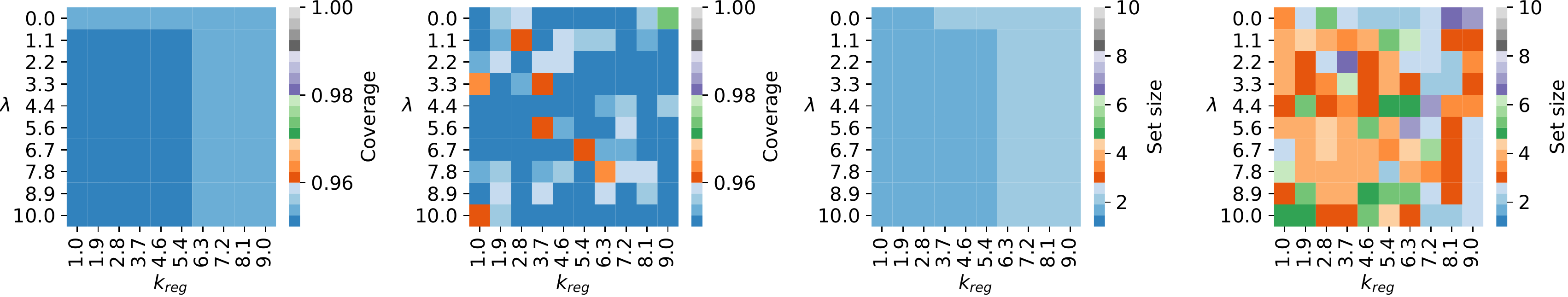}
       \vspace{-0.2in}
       \subcaption{Pedestrain data under NN, 10 classes in total}
       \label{fig:heatmap_Pedestrain}
   \end{minipage}
   \begin{minipage}[b]{\textwidth}
       \includegraphics[width=\linewidth]{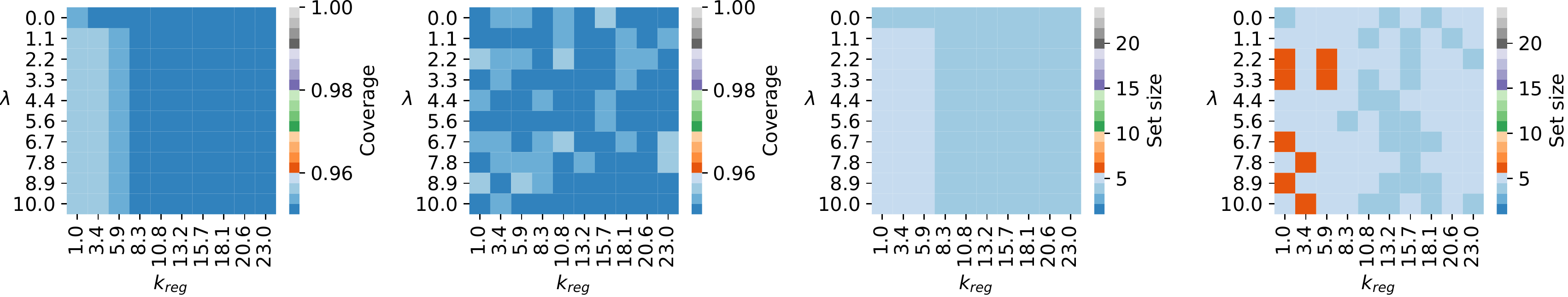}
       \vspace{-0.2in}
       \subcaption{Crop data under NN, 24 classes in total}
       \label{fig:heatmap_Crop}
   \end{minipage}
   \begin{minipage}[b]{\textwidth}
       \includegraphics[width=\linewidth]{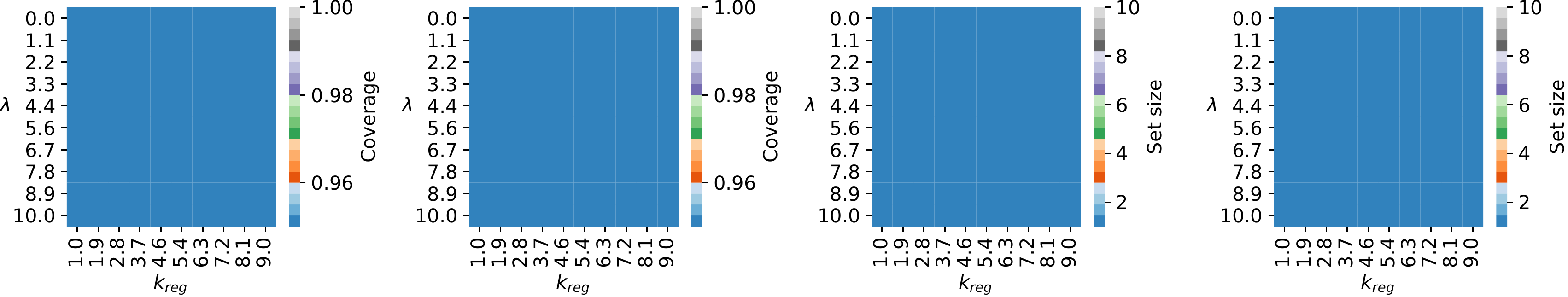}
       \vspace{-0.2in}
       \subcaption{Pen digit data under NN, 10 classes in total}
       \label{fig:heatmap_digit}
   \end{minipage}
   \vspace{-0.2in}
    \cprotect \caption{Marginal coverage (left two) and set sizes (right two) at $\alpha=0.05$ by \ERAPS \ and \SRAPS \ respectively, over 100 pairs of regularizers $(\lambda, k_{reg})$. Both methods maintain valid coverage at any regularizer pairs but \ERAPS \ yields clearly smaller average set sizes in the first two dataset, hence being more informative.}
    \label{fig:marg_cov_regularizers}
\end{figure}

We first show the performance of different methods over various $\alpha \in [0,1]$ in Table \ref{tab:marginal_cov_alpha}, which examines coverage of both methods over different target coverage at fixed a fixed pair of regularizer $(\lambda,k_{reg})=(1,2)$. The marginal coverage (resp. set size) is computed over all $T_1$ test data by checking whether each test datum is included within the prediction set (resp. computing set size). It is clear that \ERAPS \ maintains valid coverage at all times, with smaller set sizes than other methods on all but the PenDigits dataset. 

On the other hand, recall that the regularizers $\{\lambda,k_{reg}\}$ are important in controlling set sizes, so Figure \ref{fig:marg_cov_regularizers} examines the marginal results over 100 different pairs of $(\lambda,k_{reg})$ at a fixed $\alpha=0.05$. The pairs come from the Cartesian product of 10 uniformly spaced $\lambda$ (resp. $k_{reg}$) over $[0.01, 10]$ (resp. $[1, K-1]$). On the first two datasets in Figure \ref{fig:heatmap_Pedestrain} and \ref{fig:heatmap_Crop}, \SRAPS \ sometimes yields much more conservative coverage than \ERAPS \ by producing prediction sets that almost contain all labels. Such sets can be too big to be useful in practice. In contrast, \ERAPS \ typically produces sets of size between 2 to 5 that are more precise. The results are also very stable over different regularization parameters, thus relying less on parameter tuning. On the last data in Figure \ref{fig:heatmap_digit}, both methods have nearly identical results. We suspect this happens because the dataset resembles typical image classification datasets, in which data are less dependent so that existing CP methods already perform well \citep{MJ_classification,Candes_classification}. Overall, the effect of different regularizers seems minimal on the empirical coverage by \ERAPS, while for a fixed $\lambda$, larger $k_{reg}$ corresponds to weaker regularization on set sizes so that sets tend to increase in size as $k_{reg}$ increases. Hence, we suggest picking $k_{reg}$ relatively small (e.g., 1 or 2) so that the sets tend to be smaller; $\lambda$ can be set as 1. Note that \citep[Appendix E]{MJ_classification} provides guidance on computing these parameters to optimize set sizes, but doing so requires a separate set of \textit{tuning data}, which may not be feasible when data are scarce. 
\begin{table}[t]
\centering
\cprotect\caption{MelbournePedestrian data with 10 classes: Class-conditional coverage (top two tables) and set size (bottom two tables) of \ERAPS \ and \SRAPS, where we condition on different road types. Both methods maintain valid coverage, but \ERAPS \ produces much smaller sets than \SRAPS \ at each class.}
\label{tab:cond_cov_MelbournePedestrian}
\resizebox{\textwidth}{!}{%
\begin{tabular}{p{1.5cm}p{1.5cm}p{1.5cm}p{1.5cm}p{1.5cm}p{1.5cm}p{1.5cm}p{1.5cm}p{1.5cm}p{1.5cm}p{1.5cm}}
\toprule
\multicolumn{11}{c}{\ERAPS: Conditional coverage for different pedestrian road types} \\
  $\alpha$ & class 0 &  class 1 &  class 2 &  class 3 &  class 4 &  class 5 &  class 6 &  class 7 &  class 8 &  class 9 \\
 0.05 &     0.96 &     0.98 &     0.94 &     0.99 &     0.99 &     0.91 &     1.00 &     0.83 &     0.89 &     0.95 \\
  0.075 &     0.95 &     0.98 &     0.89 &     0.99 &     0.98 &     0.88 &     1.00 &     0.76 &     0.86 &     0.93 \\
  0.1 &     0.95 &     0.97 &     0.78 &     0.98 &     0.97 &     0.86 &     0.97 &     0.72 &     0.82 &     0.93 \\
  0.15 &     0.94 &     0.92 &     0.74 &     0.92 &     0.89 &     0.83 &     0.89 &     0.69 &     0.77 &     0.89 \\
  0.2 &     0.88 &     0.86 &     0.73 &     0.84 &     0.84 &     0.77 &     0.84 &     0.64 &     0.76 &     0.86 \\
\hline
\multicolumn{11}{c}{\SRAPS: Conditional coverage for different pedestrian road types} \\
  $\alpha$ & class 0 &  class 1 &  class 2 &  class 3 &  class 4 &  class 5 &  class 6 &  class 7 &  class 8 &  class 9 \\
0.05 & 0.98 & 0.97 & 0.99 & 1    & 0.98 & 0.82 & 1    & 0.76 & 0.99 & 0.98 \\
0.08 & 0.98 & 0.94 & 0.98 & 1    & 0.97 & 0.8  & 1    & 0.68 & 0.97 & 0.98 \\
0.1  & 0.98 & 0.93 & 0.95 & 1    & 0.95 & 0.75 & 1    & 0.66 & 0.9  & 0.95 \\
0.15 & 0.95 & 0.91 & 0.84 & 1    & 0.94 & 0.67 & 0.98 & 0.65 & 0.7  & 0.86 \\
0.2  & 0.94 & 0.9  & 0.73 & 1    & 0.92 & 0.59 & 0.98 & 0.64 & 0.61 & 0.8  \\
\hline
\end{tabular}%
}
\resizebox{\textwidth}{!}{%
\begin{tabular}{p{1.5cm}p{1.5cm}p{1.5cm}p{1.5cm}p{1.5cm}p{1.5cm}p{1.5cm}p{1.5cm}p{1.5cm}p{1.5cm}p{1.5cm}}
\hline
\multicolumn{11}{c}{\ERAPS: Conditional set size for different pedestrian road types} \\
  $\alpha$ &  class 0 &  class 1 &  class 2 &  class 3 &  class 4 &  class 5 &  class 6 &  class 7 &  class 8 &  class 9 \\
0.05 &     2.03 &     2.14 &     2.17 &     1.93 &     1.07 &     1.22 &     1.00 &     1.35 &     2.04 &     2.01 \\
  0.075 &     1.15 &     1.21 &     1.66 &     1.07 &     1.01 &     1.07 &     1.00 &     1.14 &     1.52 &     1.11 \\
  0.1 &     1.05 &     1.05 &     1.16 &     1.02 &     0.99 &     1.00 &     0.97 &     1.00 &     1.19 &     1.05 \\
  0.15 &     1.00 &     0.96 &     1.00 &     0.95 &     0.91 &     0.98 &     0.89 &     0.98 &     1.00 &     0.97 \\
  0.2 &     0.93 &     0.90 &     0.99 &     0.87 &     0.86 &     0.91 &     0.84 &     0.92 &     0.99 &     0.93 \\
  \hline
\multicolumn{11}{c}{\SRAPS: Conditional set size for different pedestrian road types} \\
  $\alpha$ &  class 0 &  class 1 &  class 2 &  class 3 &  class 4 &  class 5 &  class 6 &  class 7 &  class 8 &  class 9 \\
0.05 & 4    & 4.28 & 4.52 & 4    & 4.08 & 4.1  & 4    & 4.4  & 4.96 & 4    \\
0.08 & 3.05 & 3.3  & 3.94 & 3    & 3.41 & 3.66 & 3.07 & 3.86 & 4    & 3.06 \\
0.1  & 2.06 & 2.3  & 2.94 & 2    & 2.36 & 2.69 & 2.12 & 2.78 & 3    & 2.12 \\
0.15 & 1.06 & 1.29 & 1.8  & 1.02 & 1.26 & 1.55 & 1.06 & 1.54 & 2    & 1.16 \\
0.2  & 1    & 1.24 & 1.27 & 1.01 & 1.05 & 1.12 & 1.01 & 1.34 & 1.83 & 1   \\
\bottomrule
\end{tabular}%
}

\end{table}

\subsection{Conditional results} \label{exp:conditional}
\begin{table}[t]
\centering
\cprotect\caption{Set-stratified coverage of \ERAPS \ and \SRAPS. The second column onward is indexed by the sizes of set strata. NaN entries indicate no sets with sizes within this strata; we prefer more NaN entries at larger set strata. Both \ERAPS \ and \SRAPS \ could maintain valid set-stratified coverage, while the former yields smaller prediction sets.}
\label{tab:set_stratified_cov}
\resizebox{\textwidth}{!}{%
\begin{tabular}{p{1.5cm}p{1.5cm}p{1.5cm}p{1.5cm}p{1.5cm}p{1.5cm}|p{1.5cm}p{1.5cm}p{1.5cm}p{1.5cm}p{1.5cm}}
\toprule
& \multicolumn{5}{c|}{\ERAPS \ on Pedestrian} & \multicolumn{5}{c}{\SRAPS \ on Pedestrian} \\
  $\alpha$ & $[0,2)$ &  $[2,4)$ &  $[4,6)$ &  $[6,8)$ &  $[8,10)$ & $[0,2)$ &  $[2,4)$ &  $[4,6)$ &  $[6,8)$ &  $[8,10)$ \\
0.05 &  0.97 &  0.93 & NaN & NaN & NaN &   NaN &   NaN &  0.95 & NaN & NaN \\
0.075 &  0.95 &  0.79 & NaN & NaN & NaN &   NaN &  0.98 &  0.88 & NaN & NaN \\
0.1 &  0.91 &  0.70 & NaN & NaN & NaN &   NaN &  0.91 &   NaN & NaN & NaN \\
0.15 &  0.85 &   NaN & NaN & NaN & NaN &  0.94 &  0.70 &   NaN & NaN & NaN \\
0.2 &  0.81 &   NaN & NaN & NaN & NaN &  0.87 &  0.57 &   NaN & NaN & NaN \\
\hline 
& \multicolumn{5}{c|}{\ERAPS \ on Crop} & \multicolumn{5}{c}{\SRAPS \ on Crop} \\
  $\alpha$ & $[0,4)$ &  $[4,9)$ &  $[9,14)$ &  $[14,19)$ &  $[19,24)$ & $[0,4)$ &  $[4,9)$ &  $[9,14)$ &  $[14,19)$ &  $[19,24)$ \\
0.05 &  1.00 &  0.95 & NaN & NaN & NaN &   NaN &  0.95 & NaN & NaN & NaN \\
0.075 &  0.98 &  0.90 & NaN & NaN & NaN &   NaN &  0.93 & NaN & NaN & NaN \\
0.1 &  0.92 &  0.82 & NaN & NaN & NaN &  0.98 &  0.84 & NaN & NaN & NaN \\
0.15 &  0.86 &  0.39 & NaN & NaN & NaN &  0.86 &   NaN & NaN & NaN & NaN \\
0.2 &  0.82 &   NaN & NaN & NaN & NaN &  0.81 &   NaN & NaN & NaN & NaN \\
\hline 
& \multicolumn{5}{c|}{\ERAPS \ on Pen digits} & \multicolumn{5}{c}{\SRAPS \ on Pen digits} \\
  $\alpha$ & $[0,2)$ &  $[2,4)$ &  $[4,6)$ &  $[6,8)$ &  $[8,10)$ & $[0,2)$ &  $[2,4)$ &  $[4,6)$ &  $[6,8)$ &  $[8,10)$ \\
0.05 &  0.94 & NaN & NaN & NaN & NaN &  0.92 & NaN & NaN & NaN & NaN \\
0.075 &  0.91 & NaN & NaN & NaN & NaN &  0.89 & NaN & NaN & NaN & NaN \\
0.1 &  0.89 & NaN & NaN & NaN & NaN &  0.87 & NaN & NaN & NaN & NaN \\
0.15 &  0.84 & NaN & NaN & NaN & NaN &  0.82 & NaN & NaN & NaN & NaN \\
0.2 &  0.79 & NaN & NaN & NaN & NaN &  0.77 & NaN & NaN & NaN & NaN \\
\bottomrule
\end{tabular}
}

\end{table}
We consider two types of conditional coverage for different goals. The first is the \textit{class-conditional} coverage as defined in Eq. \eqref{def:class_cond_cov}, where we aim to reach uniform coverage over all classes, regardless of its frequency $P(Y)$. Valid class-conditional coverage means the coverage is at least $1-\alpha$ over all classes. The second is similar to the \textit{set-stratified} conditional coverage \citep[Section 4]{MJ_classification}, where we assess how sets of different sizes undercover or overcover. The set-stratified coverage results better inform decision-making given the fixed sizes of the prediction sets. The ideal case for set-stratified coverage is that all sets are small (e.g., sizes are close to 1) and maintain valid coverage at each size strata. In particular, we do not use the class-conditional non-conformity scores as in Remark \ref{remark:class_cond} because the scores heavily depend on the quality of estimation and/or the similarity in test and training distribution of non-conformity scores. When either of these requirements breaks down, we observe under-coverage at certain labels because the class-conditional thresholds based on training data become too small. The same issue did not occur when we used $\hat \tau_{cal}$, the marginal threshold, likely because classes with large non-conformity scores raise the overall threshold, thus increasing set sizes to maintain valid class-conditional coverage. We did not use $\hat{\tau}^{\max}_{cal}$ because the sets tend to be too conservative.

Table \ref{tab:cond_cov_MelbournePedestrian} shows the class-conditional results for the Pedestrian dataset\footnote{The classes correspond to Bourke Street Mall, Southern Cross Station, New Quay, Flinders St Station Underpass, QV Market-Elizabeth, Convention/Exhibition Centre,  Chinatown-Swanston St, Webb Bridge, Tin Alley-Swanston St, Southbank.}, where we compare between \ERAPS \ and \SRAPS \ and note that the comparisons against \SAPS \ and Naive are similar. \ERAPS \ tends to maintain valid class-conditional coverage at most classes with similar set sizes, whereas \SRAPS \ can be too conservative. Table \ref{tab:set_stratified_cov} shows the set-stratified conditional results for all three datasets. We can see that nearly all sets by \ERAPS \ are small, and it reaches valid coverage on most of its strata; there is significant under-coverage in certain cases (e.g., $\alpha=0.15$ with strata = $[4,9)$ on crop data) due to the possible existence of outliers. However, as we have seen from earlier results, the overall coverage is likely unaffected as outliers are rare. Similar performance is observed when using \SRAPS, which may sometimes yield large prediction sets and be conservative in terms of coverage.





\section{Conclusion}\label{sec:conclude}
In this work, we extend techniques in \citep{EnbPI} for building distribution-free, ensemble-based prediction sets for classification problems. Theoretically, we bound coverage gaps of the estimated prediction sets and demonstrate set convergence under further assumptions. Empirically, \ERAPS \ tends to yield smaller prediction sets and valid marginal and conditional coverage. In the future, we aim to more systematically compare \ERAPS \ against other competing conformal prediction baselines on more benchmark datasets. We also want to extend beyond the worst-case analyses of coverage gaps with relaxed assumptions. Lastly, we believe it is important to explre other designs of non-conformity scores beyond \citep{MJ_classification} for better practical performance under various situations. 

\section{Proof}\label{sec:proof}

\begin{proof}[Proof of Lemma \ref{lem:tildeFandhatF}]
The proof is identical to that of \citep[Lemma 2]{EnbPI} so we omit the mathematical details. The gist of the proof proceeds by bounding the size of the set of past $T$ estimated non-conformity scores which deviate too much from the oracle one. The set is denoted as
\[
S_T:=\{i \in [T]: |\hat \tau_i-\tau_i|>\gamma_T^{2/3} \}.
\]
Then, one can relate the difference $|\tildeF{x}-\hatF{x}|$ at each $x$ to a sum of two terms of indicator variables--ones whose index belongs to $S$ and ones which does not. The ones that does not belong to $S$ can be bounded using the term $|\tildeF{x}-F(x)|$ up to a multiplicative constant.
\end{proof}

\begin{proof}[Proof of Lemma \ref{lem:tildeFandF}]
The proof is identical to that of \citep[Lemma 1]{EnbPI} so we omit the mathematical details. In fact, this is a simple corollary of the famous Dvoretzky–Kiefer–Wolfowitz
inequality \citep[p.210]{Kosorok2008IntroductionTE}, which states the convergence of the empirical bridge to actual distributions under the i.i.d. assumption.
\end{proof}

\begin{proof}[Proof of Theorem \ref{thm:asym_cond_cov}]
The proof is identical to that of \citep[Theorem 1]{EnbPI} so we omit the mathematical details. The gist of the proof proceeds by bounding the non-coverage $|\PP(Y_t \notin C(X_t,\alpha)|X_t=x_t)-\alpha|$ using the sum of constant multiples of $\sup_{x}|\tildeF{x}-\hatF{x}|$ and $\sup_{x}|\tildeF{x}-F(x)|$, both of which can be bounded by Lemmas \ref{lem:tildeFandhatF} and \ref{lem:tildeFandF} above.
\end{proof}

\begin{proof}[Proof of Theorem \ref{thm:asy_set}]
Based on the assumptions and the definition in \eqref{eq:set}, we now have
\begin{align*}
    C^*(X_t,\alpha)&=\{1,\ldots,c^*\}, c^*=\arg\max_{c} \tau_t(c) < F^{-1}(1-\alpha),\\
    C(X_t,\alpha)&=\{1,\ldots,\hat{c}\}, \hat{c}=\arg\max_{c} \hat{\tau}_t(c) < \hat{F}^{-1}(1-\alpha),
\end{align*}
where $\hat{F}^{-1}$ is the empirical CDF based on estimated non-conformity scores $\{\hat{\tau}_{t-T},\ldots,\hat{\tau}_{t-1}\}$.
We now show that $ C(X_t,\alpha) \Delta C^*(X_t,\alpha) \leq 1$ if and only if 
\[
\| \hat{\tau}_t- \tau_t\|_{\infty} \rightarrow 0 \text{ and }\hat{F}^{-1}(1-\alpha)\rightarrow F^{-1}(1-\alpha).
\]
($\Rightarrow$) Without loss of generality, suppose that $\hat{c}<c^*$ so that $ C(X_t,\alpha) \Delta C^*(X_t,\alpha) > 1$. Then, by definition of the prediction sets, we must have 
\begin{align*}
    & \hat{\tau}_t(c^*) \geq \hat{F}^{-1}(1-\alpha), \\
    & \tau_t(c^*) < F^{-1}(1-\alpha).
\end{align*}
Denote $\delta_{\tau,t}:=\hat{\tau}_t(c^*)-\tau_t(c^*)$ and $\delta_{F,t}:=F^{-1}(1-\alpha)-\hat{F}^{-1}(1-\alpha)$, we thus have 
\[
\delta_{\tau,t}+\delta_{F,t}\geq F^{-1}(1-\alpha)-\tau_t(c^*)>0.
\]
However, this is a contraction when $T$ approaches infinity---by the assumption that $\| \hat{\tau}_t- \tau_t\|_{\infty} \rightarrow 0$ and the earlier results that $\hat{F}^{-1}(1-\alpha)\rightarrow F^{-1}(1-\alpha)$, we must have $\delta_{\tau,t}$ and $\delta_{F,t}$ both converging to zero.

\noindent ($\Leftarrow$) By the form of the estimated and true prediction sets, it is obvious that if $\| \hat{\tau}_t- \tau_t\|_{\infty} \rightarrow 0$ and $\hat{F}^{-1}(1-\alpha)\rightarrow F^{-1}(1-\alpha)$, their set difference must converges to zero.
\end{proof}

\bibliography{References}
\bibliographystyle{plainnat}

\end{document}